\documentclass[11pt]{article}

\usepackage[preprint]{acl}

\usepackage{times}
\usepackage{latexsym}

\usepackage[T1]{fontenc}

\usepackage[utf8]{inputenc}

\usepackage{microtype}

\usepackage{inconsolata}

\usepackage{graphicx}

\usepackage{subcaption}
\usepackage{amsmath,amsthm,amsfonts,amssymb}
\usepackage{booktabs}
\usepackage{multirow}
\usepackage{cleveref}

\usepackage[ruled,vlined]{algorithm2e} 
\SetAlgoVlined
\SetAlgoNoEnd                    
\LinesNumbered                   
\SetAlgoNlRelativeSize{0}        
\SetNlSkip{0.4em}                
\SetAlFnt{\normalsize}           

\usepackage[svgnames,dvipsnames]{xcolor}
\newtheorem{theorem}{Theorem}[section]

\usepackage{mathtools} 
\DeclareMathOperator{\KL}{KL}

%
%

\title{Flip-Flop Consistency:\\Unsupervised Training for Robustness to Prompt Perturbations in LLMs}

\author{Parsa Hejabi \And Elnaz Rahmati \And Alireza S. Ziabari \And Morteza Dehghani \AND
\normalfont \small  University of Southern California \\
\texttt{\small\{hejabi, erahmati, salkhord, mdehghan\}@usc.edu} \\
}

\begin{document}
\maketitle
\begin{abstract}
Large Language Models (LLMs) often produce inconsistent answers when faced with different phrasings of the same prompt. In this paper, we propose \emph{Flip-Flop Consistency} (F$^2$C), an unsupervised training method that improves robustness to such perturbations. F$^2$C is composed of two key components. The first, \emph{Consensus Cross-Entropy} (CCE), uses a majority vote across prompt variations to create a hard pseudo-label. The second is a representation alignment loss that pulls lower-confidence and non-majority predictors toward the consensus established by high-confidence, majority-voting variations.
We evaluate our method on 11 datasets spanning four NLP tasks, with 4--15 prompt variations per dataset. On average, F$^2$C raises observed agreement by 11.62\%, improves mean $F_1$ by 8.94\%, and reduces performance variance across formats by 3.29\%. In out-of-domain evaluations, F$^2$C generalizes effectively, increasing $\overline{F_1}$ and agreement while decreasing variance across most source-target pairs. Finally, when trained on only a subset of prompt perturbations and evaluated on held-out formats, F$^2$C consistently improves both performance and agreement while reducing variance.
These findings highlight F$^2$C as an effective unsupervised method for enhancing LLM consistency, performance, and generalization under prompt perturbations.\footnote{Code is available at our \href{https://github.com/ParsaHejabi/Flip-Flop-Consistency-Unsupervised-Training-for-Robustness-to-Prompt-Perturbations-in-LLMs}{GitHub repository}.}
\end{abstract}

\section{Introduction}
\begin{figure}[t]
  \centering
  \includegraphics[width=\linewidth]{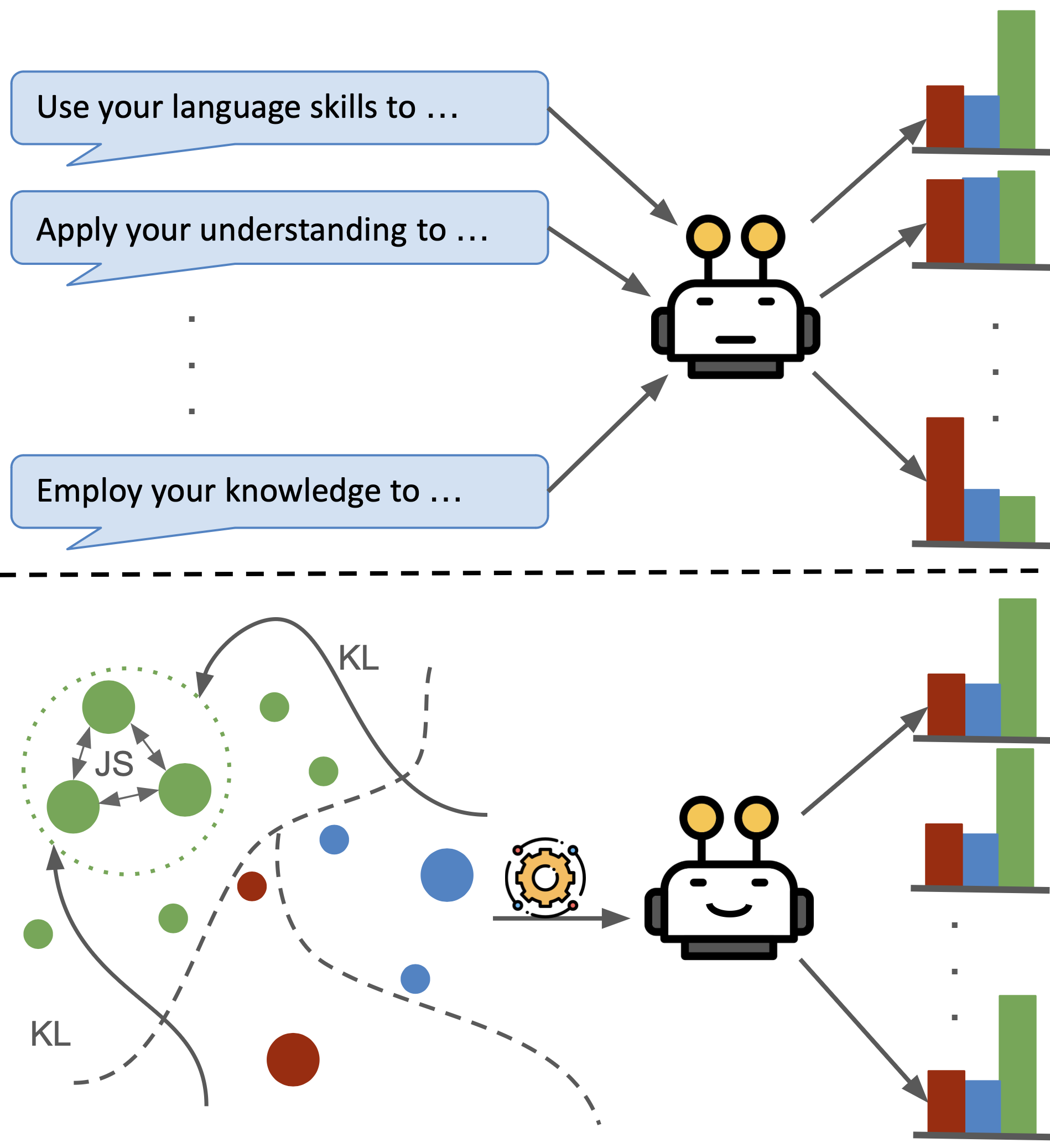}
  \caption{Our method aligns representations of input variations to promote consistency. To this end, we minimize the JS divergence among datapoints within the high-confidence consensus group, and the KL divergence between all other datapoints and that group.
  }
  \label{fig:main}
\end{figure}

Large Language Models (LLMs) are increasingly deployed across diverse domains, including high-stakes settings such as law and medicine \citep{openai2025gpt5systemcard, Singhal2025-bi, guha2023legalbench}, which raises the bar for reliability and trustworthiness. A core requirement for a trustworthy model is \emph{semantic consistency}: when the phrasing of a question varies but its meaning remains the same, the model's answer should remain consistent. Recent studies show that LLM predictions can vary sharply under prompt perturbations such as formatting, casing, separators, paraphrasing, item ordering in few-shot settings, and other surface changes, often shifting reported accuracy by large margins \citep{sclar2024quantifying, qiang-etal-2024-prompt, sun2024evaluating, lu-etal-2022-fantastically, cummins2025threat}. Accordingly, several works advocate reporting performance ranges (or variance) across prompt variants rather than a single point estimate \citep{mizrahi-etal-2024-state, polo2024efficient, alzahrani-etal-2024-benchmarks}.

While numerous studies evaluate the consistency in existing models and propose new metrics to quantify it \citep{chatterjee-etal-2024-posix, NEURIPS2024_7fa5a377, nalbandyan-etal-2025-score}, fewer works aim to improve consistency within the models themselves. One line of work addresses this issue using prompt engineering techniques to search for the highest-performing prompt \citep{fu-etal-2024-learning, sclar2024quantifying, ngweta2025llmsrobustnesschangesprompt, NEURIPS2024_7fa5a377, raj2025semanticconsistencyassuringreliability, voronov-etal-2024-mind, salinas-morstatter-2024-butterfly}. However, while effective, these techniques add computational overhead for prompt optimization and do not resolve the internal inconsistency of the models. Other approaches try to solve this issue via supervised fine-tuning (SFT) \citep{qiang-etal-2024-prompt, yan-etal-2024-contrastive, sun2024evaluating, fu2025questiondifferentwordslatent}, although these methods are limited by the availability of labeled data. Lastly, inference-time intervention approaches try to address this issue through model editing \citep{yang-etal-2024-enhancing} and activation steering \citep{yang2025lfsteeringlatentfeatureactivation}. According to \citet{yang-etal-2024-enhancing}, despite being transparent, these methods fall behind SFT methods for improving performance.
In the unsupervised setting, \citet{zhou-etal-2022-prompt} propose ``swarm distillation,'' a pairwise consistency loss that aligns the representations of input variations. They use a diverse set of variations provided by the Public Pool of Prompts \citep[P3;][]{bach2022promptsourceintegrateddevelopmentenvironment} for input-output pairs of 11 datasets. However, \citet{NEURIPS2024_7fa5a377} show that while this method improves consistency, it decreases overall performance.


We target the gap of improving consistency in the absence of supervision while preserving task performance, and propose \emph{Flip-Flop Consistency} (F$^2$C), an unsupervised training algorithm that improves robustness to prompt perturbations by aligning their representations without sacrificing the performance. Prior work \citep{chen2025trulyneedsamplesmultillm, chen2024universal} has demonstrated that when a model consistently outputs the same label across variations, that label is more likely to be correct, whereas incorrect labels tend to be scattered, reflecting low confidence. Building on this insight, F$^2$C takes the majority answer across variations as a \emph{pseudo-label} for each data point. It then combines two components: (1) a cross-entropy loss that treats the pseudo-label as a hard label for all variations and (2) a divergence loss that aligns the distributions of less-confident and non-majority variations with those that confidently predict the majority. Together, these terms both increase the pseudo-label's probability and enforce consistency across variations.

We evaluate F$^2$C on 11 datasets through three studies: (1) a comprehensive analysis of its performance against the base model and swarm distillation; (2) testing out-of-domain (OOD) generalization, where a model trained on one dataset is evaluated on the others; and (3) examining generalization to unseen prompt perturbations, where the model is trained on the first $K$ prompt formats and evaluated on held-out formats. In \Cref{subsec:study1}, we demonstrate that F$^2$C significantly raises observed agreement ($P_o$) on average by 11.62\%, whereas swarm distillation slightly decreases it ($-$0.38\%), showing a +12.00\% agreement margin over swarm. As a beneficial byproduct, F$^2$C also improves $\overline{F_1}$ on 9 out of 11 datasets with an average gain of +8.94\% (vs.\ CCE: +8.36\%, swarm: +1.40\%) and reduces across-format $\sigma_{F_1}$ by 3.29\% on average (vs.\ CCE: 3.05\%, swarm: 0.47\%). For OOD generalization (\Cref{subsec:study2}), F$^2$C successfully generalizes to the OOD data, yielding higher $\overline{F_1}$ on 74/80 train$\rightarrow$test pairs, increases $P_o$ on 64/80, and lowers $\sigma_{F_1}$ on 66/80 compared to the base model. Finally, under limited format diversity (\Cref{subsec:study3}), we demonstrate that increasing the number of training formats in F$^2$C consistently lifts $\overline{F_1}$ and $P_o$ while shrinking $\sigma_{F_1}$, demonstrating robustness to unseen formats, despite only being trained on 5 or 10 prompt variations.

\section{Related Work}

\subsection{Evaluating Consistency in LLMs}
Despite strong zero-shot performance across many tasks \citep{NEURIPS2020_1457c0d6}, LLMs can be inconsistent-even contradictory-when responding to prompts that are semantically equivalent but phrased differently. Therefore, recent works advocate reporting performance as a \emph{range} across prompt variants, rather than a single score that may reflect only a best-case \citep{mizrahi-etal-2024-state, polo2024efficient, alzahrani-etal-2024-benchmarks, wang-etal-2024-assessing}. Empirical studies show large accuracy variations from simple format changes such as paraphrasing, casing, separators, spacing, and option ordering \citep{sclar2024quantifying, NEURIPS2024_7fa5a377, qiang-etal-2024-prompt, sun2024evaluating, lu-etal-2022-fantastically, cummins2025threat, alzahrani-etal-2024-benchmarks, wang-etal-2024-assessing}. Beyond raw accuracy spread, several frameworks and metrics target consistency more directly. \citet{nalbandyan-etal-2025-score} propose various non-adversarial perturbations, such as paraphrasing, option reordering, and temperature sampling with multiple independent samples, to yield more realistic estimates. In contrast, \citet{chatterjee-etal-2024-posix} argue that accuracy variance across templates overlooks response distribution and therefore cannot distinguish a model that is consistently wrong from one that produces different wrong answers depending on the template. The authors then introduce a sensitivity index, \texttt{POSIX}, capturing response overlap, entropy, semantic coherence, and confidence variation. For each meaning-preserving input format and its model-generated answer, it averages the difference of the probabilities of generating the same response across all prompt variants.


\subsection{Improving Consistency in LLMs}
Work on improving consistency spans several directions. A line of work addresses inconsistency via prompt engineering without changing model weights. \citet{fu-etal-2024-learning} train a small seq2seq ``paraphrase generator'' to rewrite queries into expressions the target LLM prefers.
Their method improves accuracy across QA, commonsense, and math tasks. \citet{ngweta2025llmsrobustnesschangesprompt} propose \emph{Mixture of Formats} (MOF), in which each few-shot example in the prompt uses a distinct format. \citet{raj2025semanticconsistencyassuringreliability} introduce \emph{Ask-to-Choose} (A2C), which samples multiple candidate answers and then prompts an LLM to select the best answer from those candidates. These approaches are effective at the prompt level but do not resolve the model’s internal inconsistency while incurring inference-time overhead to obtain a strong prompt.

Supervised training has also been used to improve consistency. \citet{yan-etal-2024-contrastive} take a contrastive-learning approach and make hidden states for paraphrased instructions with the same input-output pair closer and push apart hard negatives (same instruction, different input-output). They use paraphrasing to create perturbations for the training data; however, more diverse perturbations, e.g., typos, word substitutions, appending random sequences at the end of instructions, and multilingual paraphrases, are used for evaluation data. Their method improves robustness to unseen perturbed instructions, with an average accuracy gain of 2.5\% over continual instruction tuning. \citet{zhao-etal-2024-improving} introduce a two-stage alignment framework with two metrics, \emph{Consistency Rate} (pairwise agreement across paraphrases with an LLM-as-judge) and \emph{Maximum Consistency Rate} (the fraction of responses in the largest mutually consistent group). Stage~1 performs SFT on paraphrased instructions that share the same input–output, and Stage~2 generates multiple responses per input, scores them on format validity and correctness (using the gold label), forms preference pairs, and optimizes a DPO-style \citep{rafailov2023direct} ranking loss. A Vicuna-13B \citep{vicuna2023} model trained with this pipeline surpasses GPT-4 on CR. Similarly, \citet{qiang-etal-2024-prompt} propose Prompt Perturbation Consistency Learning (PPCL): during fine-tuning, they feed both a clean utterance and its perturbed version (oronyms, synonyms, or paraphrases) and optimize the cross-entropy on each. They also add a Jensen-Shannon (JS) divergence term between their token-level output distributions, which recovers much of the performance lost under prompt noise. \citet{sun2024evaluating} add small trainable soft-prompt embeddings and optimize them to make representations of semantically equivalent instructions more similar, which consistently improves zero-shot robustness to new phrasing. Finally, \citet{fu2025questiondifferentwordslatent} propose Latent Adversarial Paraphrasing (LAP): a bi-level scheme where an inner loop learns a constrained latent perturbation that acts as a continuous paraphrase while preserving semantics, and an outer loop fine-tunes the model on these perturbed inputs, improving worst-case win-rate by about 0.5–4\% without adding inference-time latency.

Model editing \citep{NEURIPS2022_6f1d43d5} and activation steering \citep{turner2024steeringlanguagemodelsactivation} have also been adapted to this problem. \citet{yang-etal-2024-enhancing} use a \emph{locate-then-edit} pipeline. They build paraphrase pairs, label each pair by whether the model's predictions agree (consistency), concatenate hidden states from both prompts, and train linear classifiers on last-token activations from each attention/MLP layer to predict the consistency label. They select the top-$K$ components with the highest classifier accuracy as key components for semantic consistency. For each component, they compute the difference between the mean hidden output of the consistent pairs and the mean over all pairs, and add this as a bias to that component's hidden state. This increases accuracy and reduces the across-variant standard deviation on NLU tasks, and increases mean pairwise cosine similarity across variants for NLG tasks. \citet{yang2025lfsteeringlatentfeatureactivation} use the same idea to identify the most influential transformer layer for consistency, then train a Top-K Sparse Autoencoder \citep[SAE,][]{gao2025scaling} to decompose its representation into a higher dimension. Using contrastive prompt pairs (correct vs.\ incorrect outputs), they select key SAE features with average activation differences exceeding a threshold and, at inference, add the learned feature offsets when the corresponding features activate, steering the model toward consistency. \citet{yang-etal-2024-enhancing} reported that, although these inference-time intervention methods are transparent, they generally fall behind SFT in performance.

\citet{zhou-etal-2022-prompt} propose an unsupervised \emph{swarm distillation} loss. For each instance, they sample a pair of prompt formats with the same semantical meaning and apply pairwise distillation \citep{hinton2015distillingknowledgeneuralnetwork} so that one prompt's output distribution teaches the other (each prompt format can be both teacher and student). Using Fleiss' \emph{kappa} \citep{Fleiss1971-lv} to measure agreement across prompt variations, they report a relative 14.6\% increase over the T0-3B baseline on 8 out of 11 NLP datasets.

\citet{huang-etal-2023-large} leverage Self-Consistency decoding \citep{wang2023selfconsistency}, and sample multiple chain-of-thought solutions for each unlabeled question and take the majority answer as the ``high-confidence.'' They then retain all reasoning paths that yield the majority answer, convert each path into four mixed input-output formats, and then perform supervised fine-tuning on the resulting set and gain up to 7.7\% on GSM8K \citep{cobbe2021trainingverifierssolvemath}.

\section{Method}
To improve prompt perturbation robustness, we drew inspiration from two prior works. First, majority voting across prompt variations \citep{salinas-morstatter-2024-butterfly} builds on Self-Consistency \citep{wang2023selfconsistency} and achieves the highest overall accuracy across 11 classification tasks. Second, swarm distillation \citep{zhou-etal-2022-prompt} encourages consistency across prompt variations by minimizing KL divergence between all pairs. This is implemented via sequence-level distillation \citep{kim-rush-2016-sequence}, where each prompt simultaneously acts as both teacher and student. 

Building on these two ideas, we focus on an unsupervised setting where no gold labels are available during training and pseudo-labels must instead be inferred from the model's own responses across variations. Because the most frequent label produced across perturbations tends to represent the model's most confident and often correct prediction \citep{chen2024universal, chen2025trulyneedsamplesmultillm}, it is natural to treat the majority answer as a training signal.
However, plain cross-entropy alone does not guarantee consistency across semantically equivalent formats. It only increases the probability of the pseudo-labeled answer within each format without aligning distributions between formats.

Swarm distillation addresses this issue by enforcing agreement, pulling all variations' distributions toward their average (uniform mixture; see \autoref{thm:mixkl}) regardless of each variation's prediction. Yet, \citet{NEURIPS2024_7fa5a377} show that this averaging can harm overall performance, likely because the model overfits to noisy or lower quality mixtures. 

To address both limitations, we introduce ``Flip-Flop Consistency'' by combining two complementary components: (1) supervising with majority-vote pseudo-labels using Consensus Cross-Entropy (CCE; \Cref{subsec:cce}) and (2) aligning distributions to a stronger target, defined as the average distribution computed only from variations that confidently select the majority label (\Cref{subsec:ff}).

\subsection{Problem Formulation}

Let $\mathcal{T}$ be a classification task with $L$ labels $\mathcal{Y}=\{\,\ell_1,\dots,\ell_L\,\}$. The dataset consists of $N$ instances $\{(x_i,y_i)\}_{i=1}^N$, where $y_i\in\mathcal{Y}$ denotes the gold label. In our unsupervised setting, the gold labels are not used for training. We assume a set of $V$ prompt templates that preserve semantic meaning, $\mathcal{R}=\{\,r_1,\dots,r_V\,\}$. Each template $r_v$ renders inputs and label options:
\begin{equation}
x_i^{(v)} = r_v(x_i),
\label{eq:render-input}
\end{equation}
\begin{equation}
y_c^{(v)} = r_v(\ell_c)\quad\text{for } c=1,\dots,L.
\label{eq:render-label}
\end{equation}

\paragraph{Per-variation scoring.}
Using the model's length-normalized token-level log-likelihood for choosing label $\ell_c$ under template $v$, denoted $\mathrm{LL}_i[v,c]$ (see Appendix \ref{sec:appendix-implementation-details} for the exact computation), we define the per-variation label distribution as $\pi_{i,v,c}=\mathrm{softmax}(\mathrm{LL}_i[v,c])$ and the per-variation prediction
\begin{equation}
\hat{y}_{i,v} \;=\; \arg\max_{c\in\{1,\dots,L\}} \pi_{i,v,c}.
\label{eq:per-var-pred}
\end{equation}

\subsection{Consensus Cross-Entropy}
\label{subsec:cce}
We construct a pseudo-label via majority vote across variations and then fit the model to that label.

\paragraph{Consensus label.}
Define vote counts
\begin{equation}
n_{i,c}\;=\;\sum_{v=1}^{V}\mathbf{1}\!\left[\hat{y}_{i,v}=c\right],
\label{eq:cce-counts}
\end{equation}
and set the ``consensus'' (strict majority) label
\begin{equation}
c_i^\star\;=\;\arg\max_{c} n_{i,c}
\quad\text{with}\quad n_{i,c_i^\star}>\tfrac{V}{2};
\label{eq:cce-consensus}
\end{equation}
otherwise, no consensus is formed for instance $i$.

\paragraph{Loss.}
When a consensus exists for example $i$ with label $c_i^\star$, let $\ell_{i,v}$ denote the negative log-likelihood of the consensus answer $y_{c_i^\star}^{(v)}$ under variation $v$ given $x_i^{(v)}$ (scoring only the answer tokens). The instance-level CCE is
\begin{equation}
\mathcal{L}_{\mathrm{CCE}}(i)
\;=\;
\mathbf{1}\!\left[n_{i,c_i^\star}>\tfrac{V}{2}\right]\;
\lambda_{\mathrm{CCE}}\;\frac{1}{V}\sum_{v=1}^{V}\ell_{i,v},
\label{eq:cce-instance}
\end{equation}
and the training objective averages over examples:
\begin{equation}
\mathcal{L}_{\mathrm{CCE}}
\;=\;\frac{1}{N}\sum_{i=1}^{N}\mathcal{L}_{\mathrm{CCE}}(i).
\label{eq:cce-avg}
\end{equation}

Only examples with a strict majority contribute to the loss; if no consensus exists, $\mathcal{L}_{\mathrm{CCE}}(i)=0$. The coefficient $\lambda_{\mathrm{CCE}}$ controls this term's strength.

\subsection{Flip-Flop Consistency}
\label{subsec:ff}
We combine CCE with a representation alignment objective. Among the variations that vote for the consensus label $c_i^\star$, we identify confident prompts (the consensus-confident, or \emph{CC}, set) and align the remaining prompts (the non-confident or non-consensus, \emph{NC}, set) toward the CC set, while also encouraging agreement within the CC set. If no strict consensus exists, the example is skipped (no loss). We describe the details on forming the CC and NC sets in \autoref{alg:ff-controls}.

Given the strict majority consensus $c_i^\star$ and its consensus set $G=\{v:\hat{y}_{i,v}=c_i^\star\}$, the algorithm checks if a majority exists (line \ref{alg:ff-controls:line1}-\ref{alg:ff-controls:line2}), computes per-variation confidence margins $m_v$ by calculating the difference of log-likelihood between the consensus label and the most probable non-consensus label. Then, it takes the median $m_{\text{med}}$ as a representative for the consensus set (lines \ref{alg:ff-controls:line3}-\ref{alg:ff-controls:line5}). When $|G|{=}V$ and $m_{\text{med}}\!\ge\!\tau_{\text{unanimous}}$, all variations predict the same label confidently, so the algorithm returns $T_i{=}|G|$ (CC set) and $S_i{=}\emptyset$ (NC set) (line \ref{alg:ff-controls:line7}). If variations are not confident in producing the majority label or they produce different labels, the algorithm forms a CC/NC split to pull the NC set's representations toward the mean distribution of the CC set. This is done by picking the top-$k$ majority voter variations as the CC set and assigning the rest to the NC set (lines \ref{alg:ff-controls:line10}, \ref{alg:ff-controls:line13}-\ref{alg:ff-controls:line14}). Lastly, $w_{\text{flip}}$ weight is calculated to control the intensity of alignment between the CC and NC sets by applying a sigmoid function to the difference of average log-likelihoods for producing the consensus label between the CC and NC sets ($\Delta$). Finally, this weight is capped between $f_{\min}$ and $f_{\max}$ hyperparameters (lines \ref{alg:ff-controls:line15}-\ref{alg:ff-controls:line18}). Degenerate branches (fewer than two variations in the CC set) return empty sets and zero weight.

All loss components operate \emph{only} on the consensus answer tokens. For each variation $v$, let $\log\mathbf{q}_{i,v}^\star$ denote the model’s token-level log-softmax over the full vocabulary when outputting the consensus answer $y_{c_i^\star}^{(v)}$ under $x_i^{(v)}$, aggregated over answer positions. We use $\mathbf{q}_{i,v}^\star=\exp(\log\mathbf{q}_{i,v}^\star)$ inside divergence losses. For the CC set $T_i$, define the CC mixture $\bar{\mathbf{q}}_{i}^{T\star}$ as the (probability-space) average of $\{\mathbf{q}_{i,t}^\star\}_{t\in T_i}$.

Let $T_i$ (CC set) and $S_i$ (NC set) be the sets returned by Algorithm~\ref{alg:ff-controls}. There are three possible cases for each instance $i$:

\textbf{Case 1: No strict majority} ($|G|\le V/2$, line \ref{alg:ff-controls:line2}).
No pseudo-label is trusted; we skip the example and apply \emph{no loss}.

\textbf{Case 2: Unanimous \& confident} ($|G|{=}V$ and $m_{\text{med}}\!\ge\!\tau_{\text{unanimous}}$, line \ref{alg:ff-controls:line7}).
Here $T_i=G$, $S_i=\emptyset$, and $w_{\text{flip}}=0$. All variations are confident in outputting the majority answer, and in order to make them even more consistent, we apply a JSD loss with $\beta_{\text{jsd}}$ hyperparameter to make them even closer to their average point.
\begin{equation}
\mathcal{L}_{\text{jsd}}(i) = \beta_{\text{jsd}}\ \mathrm{JSD}\!\bigl(\{\mathbf{q}_{i,t}^\star\}_{t\in T_i}\bigr).
\label{eq:ff-jsd}
\end{equation}

\textbf{Case 3: Consensus with split} ($|T_i|\!\ge\!2$, lines \ref{alg:ff-controls:line5}-19).
In this case, we want to only pick the top $K$ most confident majority voters as the CC set and align the NC set toward their average distribution:
\begin{equation}
\mathcal{L}_{\text{flip}}(i)
= w_{\text{flip}}\ \frac{1}{|S_i|}\sum_{s\in S_i}\mathrm{KL}\!\bigl(\mathbf{q}_{i,s}^\star\,\|\,\bar{\mathbf{q}}_{i}^{T\star}\bigr),
\label{eq:ff-kl}
\end{equation}
and also encourage agreement within the CC set using the same $\mathcal{L}_{\text{jsd}}(i)$ as in Case~2 (\Cref{eq:ff-jsd}).
We control the strength $w_{\text{flip}}$ of $\mathcal{L}_{\text{flip}}(i)$ using the hyperparameters $f_{\min}, f_{\max}$ and temperature $t$.

\paragraph{Total loss and hyperparameters.}
To summarize, the total loss per example is:
\begin{equation}
\begin{aligned}
\mathcal{L}_{\text{FF}}(i) &= \\[-0.35ex]
&\begin{cases}
0, & \text{Case 1},\\
\mathcal{L}_{\mathrm{CCE}}(i)+\mathcal{L}_{\text{jsd}}(i), & \text{Case 2},\\
\mathcal{L}_{\mathrm{CCE}}(i)+\mathcal{L}_{\text{jsd}}(i)+\mathcal{L}_{\text{flip}}(i), & \text{Case 3}.\\
\end{cases}
\end{aligned}
\label{eq:ff-total}
\end{equation}

With hyperparameters
$\lambda_{\mathrm{CCE}}$ (CCE weight),
$\tau_{\text{unanimous}}$ (confidence threshold),
$k_{\max}$ (max CC size),
$f_{\min},f_{\max}$ and $t$ (flip loss caps, temperature),
$\beta_{\text{jsd}}$ (agreement weight within the CC set).

\SetAlFnt{\small}
\begin{algorithm}[!tb]
\caption{F$^2$C for instance $i$}
\label{alg:ff-controls}

\KwData{%
  $\mathrm{LL}_i \in \mathbb{R}^{V\times L}$,\\
  Consensus label $c_i^\star$,\\
  Consensus set $G=\{\,v:\hat{y}_{i,v}=c_i^\star\,\}$%
} 
\KwIn{%
  Unanimous margin $\tau_{\text{unanimous}}$,\\
  CC set size cap $k_{\max}\!\ge\!2$,\\
  weight bounds $f_{\min}\!\le\!f_{\max}$,\\
  temperature $t\!>\!0$%
}
\KwResult{%
  CC set $T_i$,\\
  NC set $S_i$,\\
  flip weight $w_{\text{flip}}$%
}
\BlankLine

\If{$|G|\le V/2$}{\label{alg:ff-controls:line1}
  \Return{$(\emptyset,\emptyset,0)$} \tcp*{no strict majority}
}\label{alg:ff-controls:line2}

\ForEach{$v\in G$}{\label{alg:ff-controls:line3}
  $m_v \gets \mathrm{LL}_i[v,c_i^\star] - \max_{c\neq c_i^\star}\mathrm{LL}_i[v,c]$\;
}\label{alg:ff-controls:line4}
$m_{\text{med}} \gets \mathrm{median}\{m_v : v\in G\}$\;\label{alg:ff-controls:line5}

\If{$|G|=V$ \textbf{and} $m_{\text{med}}\ge \tau_{\text{unanimous}}$}{
  \Return{$(G,\emptyset,0)$} \tcp*{unanimous \& confident}
}\label{alg:ff-controls:line7}

\If{$|G|<2$}{
  \Return{$(\emptyset,\emptyset,0)$}
}

$k \gets \min\bigl(k_{\max},\,V-1\bigr)$ \tcp*{leave at least one variation in NC set}\label{alg:ff-controls:line10}
\If{$k<2$}{
  \Return{$(\emptyset,\emptyset,0)$} \tcp*{need $\ge2$ variations in CC}
}

$T_i \gets \text{top-}k$ members of $G$ by $m_v$ (descending)\;\label{alg:ff-controls:line13}
$S_i \gets \{1,\dots,V\}\setminus T_i$\;\label{alg:ff-controls:line14}

$\bar{\ell}_T \gets \frac{1}{|T_i|}\sum_{t\in T_i}\mathrm{LL}_i[t,c_i^\star]$\;\label{alg:ff-controls:line15}
$\bar{\ell}_S \gets \frac{1}{|S_i|}\sum_{s\in S_i}\mathrm{LL}_i[s,c_i^\star]$\;
$\Delta \gets \bar{\ell}_T - \bar{\ell}_S$ \tcp*{gap on consensus label}
$w_{\text{flip}} \gets f_{\min} + (f_{\max}-f_{\min})\cdot \sigma(\Delta/t)$\;\label{alg:ff-controls:line18}

\Return{$(T_i,S_i,w_{\text{flip}})$}
\end{algorithm}

\subsection{Metrics}
Following \citet{zhao-etal-2024-improving} we use the raw observed agreement ($P_o$) to measure consistency across prompt variations. We omit Fleiss’ $\kappa$ used in \citep{zhou-etal-2022-prompt} due to the prevalence/bias paradox noted by \citet{Hoehler2000-qg}. Using the vote counts $n_{i,c}$ from Eq.~\ref{eq:cce-counts}, the per-item agreement is calculated as in Eq.~\ref{eq:raw_agreement}, and $P_o$ is the average of $P_i$ over items. Intuitively, $P_i$ represents the probability that two uniformly sampled prompt variations for the same input predict the same label.
\begin{equation}
P_i \;=\; \frac{1}{V(V-1)}\sum_{c=1}^{L} n_{i,c}\,\bigl(n_{i,c}-1\bigr),
\label{eq:raw_agreement}
\end{equation}
High agreement alone may result from a collapsed model that predicts a single label for all prompts. To ensure consistency does not come at the expense of task performance, we also report $\overline{F_1}$. Prior work has further used performance spread- best- vs. worst-case performance- \citep{sclar2024quantifying}, but this metric is highly sensitive to outliers. Instead, we report the standard deviation $\sigma_{F_1}$ across prompt variations to capture performance stability.




\section{Experimental Setup}
\subsection{Datasets}
Following \citet{zhou-etal-2022-prompt}, we evaluate our proposed method on eleven classification datasets spanning four tasks, and use templates from the Public Pool of Prompts \citep[P3;][]{bach2022promptsourceintegrateddevelopmentenvironment} to create prompt variations. The NLP tasks include natural language inference (\citep[ANLI R1/R2/R3,][]{nie-etal-2020-adversarial}, \citep[CB,][]{De_Marneffe2019-wa}, \citep[RTE,][]{10.5555/3454287.3454581}), sentence completion (COPA \citep{roemmele_choice_2011}, HellaSwag \citep{zellers-etal-2019-hellaswag}, StoryCloze 2016 \citep{mostafazadeh-etal-2016-corpus}), coreference-style commonsense (WSC \citep{10.5555/3454287.3454581}, Winogrande-XL \citep{ai2:winogrande}), and word sense disambiguation (WiC \citep{pilehvar-camacho-collados-2019-wic}).
Most official test splits of these datasets are unlabeled. Therefore, we evaluate on the official \emph{validation} split. We create a stratified hold-out set from the official training split for validation. Details and statistics of all datasets are reported in the Appendix~\ref{sec:dataset}.


\subsection{Implementation Details}
\label{sec:exp_details}
We compare our approach against two baselines, the unmodified base model and the base model fine-tuned with swarm distillation. In our experiments, we fine-tune the \texttt{Qwen2.5-3B-Instruct} \citep{qwen2025qwen25technicalreport} using LoRA \citep{hu2022lora}; configuration details are provided in Appendix~\ref{paragraph:appendix-lora-config}.

\begin{table*}[t]
  \centering
  \setlength{\tabcolsep}{3.7pt}
  \resizebox{\linewidth}{!}{%
  \begin{tabular}{ll c c c c c c c c c c c}
    \toprule
     &  & ANLI R1 & ANLI R2 & ANLI R3 & CB & RTE & COPA & HellaSwag & StoryCloze & WSC & Winogrande & WiC \\
    \midrule
    \multirow{3}{*}{Base} & $\bar{F}_1$ & 39.23 & 32.31 & 31.02 & \textcolor{BrickRed}{31.70} & \textcolor{BrickRed}{77.90} & 81.73 & 35.43 & 85.35 & 42.39 & \textcolor{BrickRed}{54.43} & 11.45 \\
     & $\sigma_{F_1}$ & 13.97 & 9.76 & 9.35 & 19.46 & \textcolor{BrickRed}{4.12} & 10.65 & 2.20 & \textcolor{BrickRed}{7.78} & \textbf{\textcolor{ForestGreen}{11.84}} & \textcolor{BrickRed}{3.69} & \textcolor{BrickRed}{17.53} \\
     & $P_o$ & 52.29 & 52.80 & 52.84 & 46.73 & \textcolor{BrickRed}{85.08} & 81.54 & \textcolor{BrickRed}{50.89} & 85.24 & 74.29 & \textcolor{BrickRed}{68.87} & \textcolor{BrickRed}{85.00} \\
    \midrule
    \multirow{3}{*}{Swarm} & $\bar{F}_1$ & \textcolor{BrickRed}{38.77} & \textcolor{BrickRed}{32.06} & \textcolor{BrickRed}{30.95} & \textbf{\textcolor{ForestGreen}{34.32}} & 78.34 & \textcolor{BrickRed}{81.35} & \textcolor{BrickRed}{34.92} & \textcolor{BrickRed}{84.69} & \textbf{\textcolor{ForestGreen}{46.06}} & 62.30 & \textbf{\textcolor{ForestGreen}{14.57}} \\
     & $\sigma_{F_1}$ & \textcolor{BrickRed}{14.27} & \textcolor{BrickRed}{9.92} & \textcolor{BrickRed}{9.36} & \textcolor{BrickRed}{21.01} & 2.31 & \textcolor{BrickRed}{10.95} & 2.26 & 7.64 & \textcolor{BrickRed}{14.40} & 2.66 & \textbf{\textcolor{ForestGreen}{10.44}} \\
     & $P_o$ & \textcolor{BrickRed}{50.91} & \textcolor{BrickRed}{50.63} & \textcolor{BrickRed}{50.49} & \textcolor{BrickRed}{40.78} & 86.94 & \textcolor{BrickRed}{80.93} & 50.97 & \textcolor{BrickRed}{85.23} & \textcolor{BrickRed}{68.58} & 75.22 & \textbf{\textcolor{ForestGreen}{90.73}} \\
    \midrule
    \multirow{3}{*}{CCE} & $\bar{F}_1$ & \textbf{\textcolor{ForestGreen}{58.68}} & 39.42 & 34.08 & 32.14 & 81.26 & 90.44 & 67.64 & 96.22 & \textcolor{BrickRed}{41.43} & 63.88 & \textcolor{BrickRed}{9.72} \\
     & $\sigma_{F_1}$ & \textbf{\textcolor{ForestGreen}{2.40}} & \textbf{\textcolor{ForestGreen}{5.97}} & 7.07 & 19.27 & 1.54 & 5.52 & \textcolor{BrickRed}{2.80} & \textbf{\textcolor{ForestGreen}{0.38}} & 12.16 & 2.99 & 16.73 \\
     & $P_o$ & \textbf{\textcolor{ForestGreen}{78.96}} & \textbf{\textcolor{ForestGreen}{70.58}} & \textbf{\textcolor{ForestGreen}{72.35}} & \textbf{\textcolor{ForestGreen}{50.39}} & 91.71 & 90.32 & 72.42 & 97.05 & \textbf{\textcolor{ForestGreen}{75.91}} & 77.16 & 87.16 \\
    \midrule
    \multirow{3}{*}{F$^2$C} & $\bar{F}_1$ & 58.43 & \textbf{\textcolor{ForestGreen}{39.58}} & \textbf{\textcolor{ForestGreen}{35.77}} & 31.81 & \textbf{\textcolor{ForestGreen}{81.55}} & \textbf{\textcolor{ForestGreen}{90.84}} & \textbf{\textcolor{ForestGreen}{70.28}} & \textbf{\textcolor{ForestGreen}{96.32}} & 41.68 & \textbf{\textcolor{ForestGreen}{64.98}} & 9.99 \\
     & $\sigma_{F_1}$ & 2.61 & 6.17 & \textbf{\textcolor{ForestGreen}{6.24}} & \textbf{\textcolor{ForestGreen}{19.19}} & \textbf{\textcolor{ForestGreen}{0.82}} & \textbf{\textcolor{ForestGreen}{5.39}} & \textbf{\textcolor{ForestGreen}{2.12}} & 0.46 & 12.12 & \textbf{\textcolor{ForestGreen}{2.43}} & 16.66 \\
     & $P_o$ & 78.85 & 67.64 & 68.68 & 50.26 & \textbf{\textcolor{ForestGreen}{93.40}} & \textbf{\textcolor{ForestGreen}{90.71}} & \textbf{\textcolor{ForestGreen}{76.97}} & \textbf{\textcolor{ForestGreen}{97.32}} & 75.23 & \textbf{\textcolor{ForestGreen}{77.30}} & 87.01 \\
    \bottomrule
  \end{tabular}}
    \caption{Comparison across datasets for the base model and three training methods: Swarm (swarm distillation), CCE, and F$^2$C. Bold green values mark the best metric per dataset column, red values denote the worst.}
  \label{aggregate_all_datasets_pivoted}
\end{table*}

\section{Results}

We conduct three experiments to analyze whether our method: (1) improves robustness to prompt perturbations without reducing task performance, (2) maintains semantic consistency in out-of-domain settings, and (3) is not bound to the prompt formats used in training and generalizes to unseen formats.

Before evaluating our method, we assess the base model's inherent consistency (see Appendix \autoref{fig:app-f1-spread}). The mean interquartile range (Q3$-$Q1) of $F_1$ across prompt variations, averaged over datasets, is 13.53\%. Nearly half of the datasets (5/11) exceed a 15\% spread, indicating substantial inconsistency within the base model.

\subsection{Flip-Flop Consistency Against Baselines}
\label{subsec:study1}

To examine whether aligning representations across prompt formats further improves consistency when combined with the CCE loss, we train the model using two loss functions, CCE, and F$^2$C, and compare them against the baselines introduced in \Cref{sec:exp_details}. Implementation details are provided in \Cref{paragraph:appendix-model-selection}. For each method and dataset, we report $\overline{F_1}$, $\sigma_{F_1}$, and $P_o$ in \autoref{aggregate_all_datasets_pivoted}.

On average across all datasets, F$^2$C achieves the largest improvements over the base model. It improves agreement by 11.62\%, increases the mean $F_1$ by 8.94\%, and reduces ${F_1}$ variance by 3.29\%. In comparison, CCE raises the average agreement by 11.68\%, with slightly weaker improvement over mean $F_1$ and variance (8.36\% and -3.05\% respectively). Both F$^2$C and CCE achieve higher $\overline{F_1}$ and $P_o$ and lower $\sigma_{F_1}$ than the baselines on most datasets, including ANLI~R1/R2/R3, RTE, COPA, HellaSwag, StoryCloze, and Winogrande. In contrast, swarm distillation is the weakest baseline that lowers agreement on average by -0.38\% with only a small mean $F_1$ gain by 1.40\%. These results indicate that F$^2$C not only enhances consistency but also improves task performance.


The consensus is unreliable in cases where the base model has weak performance and high $\sigma_{F_1}$ (CB, WSC, and WiC). Therefore, pushing toward consensus does not improve performance; nevertheless, F$^2$C and CCE still raise $P_o$ in these datasets. Overall, F$^2$C increases agreement, improves task performance, and reduces across-format variance, while swarm distillation can even harm agreement.

\subsection{Generalization in Out-of-Domain Settings}
\label{subsec:study2}

We assess out-of-domain (OOD) generalization by evaluating a model trained on source dataset with F$^2$C on all other target datasets. CB, WSC, and WiC are excluded as sources due to weak performance in \Cref{subsec:study1} but are retained as targets. For each source$\rightarrow$target pair, we compute the change in mean $\overline{F_1}$, $\sigma_{F_1}$, and agreement $P_o$ on the target dataset relative to the base model (see \autoref{tab:study2_merged_summary_compact}). Appendix Figs.~\ref{fig:hm:raw_agreement}, \ref{fig:hm:meanf1}, and \ref{fig:hm:stdf1} visualize these differences for every dataset pair.

Overall, F$^2$C generalizes well across domains. Averaged over all 80 dataset pairs, observed agreement increases by 7.49\%, mean $F_1$ by 7.61\%, and $\sigma_{F_1}$ decreases by 2.94\%. Moreover, positive transfers substantially outnumber negatives across all three metrics (P/N columns).

Training on story/commonsense datasets such as COPA, and StoryCloze yields the strongest average improvements in $\overline{F_1}$, while Winogrande produces the largest average gains in $P_o$ and the greatest reduction in $\sigma_{F_1}$. RTE and ANLI~R1 also generalize reliably across many targets. Harder targets such as WSC and WiC show smaller or mixed changes in agreement, though variance typically still declines (see Appendix heatmaps~\ref{fig:hm:raw_agreement}, \ref{fig:hm:meanf1}, and \ref{fig:hm:stdf1} for per-pair patterns).

\begin{table}[t]
  \centering
  \small

  \setlength{\tabcolsep}{3.5pt}
  \resizebox{\columnwidth}{!}{%
  \begin{tabular}{lcccccc}
    \toprule
     & \multicolumn{2}{c}{$\overline{F_1}$} & \multicolumn{2}{c}{$P_o$} & \multicolumn{2}{c}{$\sigma_{F_1}$} \\
    \cmidrule(lr){2-3} \cmidrule(lr){4-5} \cmidrule(lr){6-7}
    Train Dataset & $\Delta$ & P/N & $\Delta$ & P/N & $\Delta$ & P/N \\
    \midrule
    All (80 pairs) & 7.613 & 74/6 & 7.491 & 64/16 & 2.941 & 66/14 \\
    \midrule
    ANLI R1 & 9.428 & 10/0 & 8.727 & 7/3 & 3.653 & 8/2 \\
    ANLI R2 & 5.543 & 10/0 & 6.886 & 8/2 & 2.171 & 8/2 \\
    ANLI R3 & 4.210 & 7/3 & 6.801 & 9/1 & 2.225 & 9/1 \\
    RTE & 8.077 & 10/0 & 5.938 & 8/2 & 3.255 & 9/1 \\
    COPA & \textbf{11.836} & 10/0 & 7.350 & 8/2 & 3.077 & 8/2 \\
    HellaSwag & 2.810 & 7/3 & 8.074 & 8/2 & 2.410 & 7/3 \\
    StoryCloze & 11.176 & 10/0 & 7.267 & 7/3 & 2.896 & 8/2 \\
    Winogrande & 7.823 & 10/0 & \textbf{8.882} & 9/1 & \textbf{3.839} & 9/1 \\
    \bottomrule
  \end{tabular}
  }
    \caption{Cross-dataset transfer performance under F$^2$C. Each row shows the mean signed $\Delta$ relative to the base model when the \emph{row} dataset is used for training. Columns report changes in $\overline{F_1}$, $P_o$, and $\sigma_{F_1}$, along with \emph{P/N} (number of datasets with positive or negative improvement out of 10). The top ``All (80 pairs)'' row aggregates over all source$\rightarrow$target pairs. Bold numbers indicate the best dataset for each metric.}
  \label{tab:study2_merged_summary_compact}
\end{table}

\subsection{Generalization to Unseen Variations}
\label{subsec:study3}
We test whether F$^2$C trained on a subset of prompt formats generalizes to \emph{unseen} formats. We use ANLI~R1/R2/R3 (15 formats each) and RTE (10 formats) due to their larger instance size and number of available variations. For RTE, we train with the first 5 formats and evaluate on the remaining 5. For each ANLI dataset, we hold out the last 5 formats for evaluation and train with the first 5, then with 10 (see \autoref{fig:study3_combined}).

Across four datasets, using more training formats improves both performance and agreement on the held-out formats, and reduces across-format variance. ANLI~R1 shows the largest steady gains as the number of training variations increases. ANLI~R2 improves moderately but monotonically. ANLI~R3 shows a small decrease with 10 variations but improves with 15 variations. RTE is strong even at 5 formats and still rises with more. Error bands ($\sigma_{F_1}$ over held-out formats) shrink as we add formats, indicating higher semantic consistency.

\begin{figure}[t]
  \centering
  \includegraphics[width=0.95\linewidth]{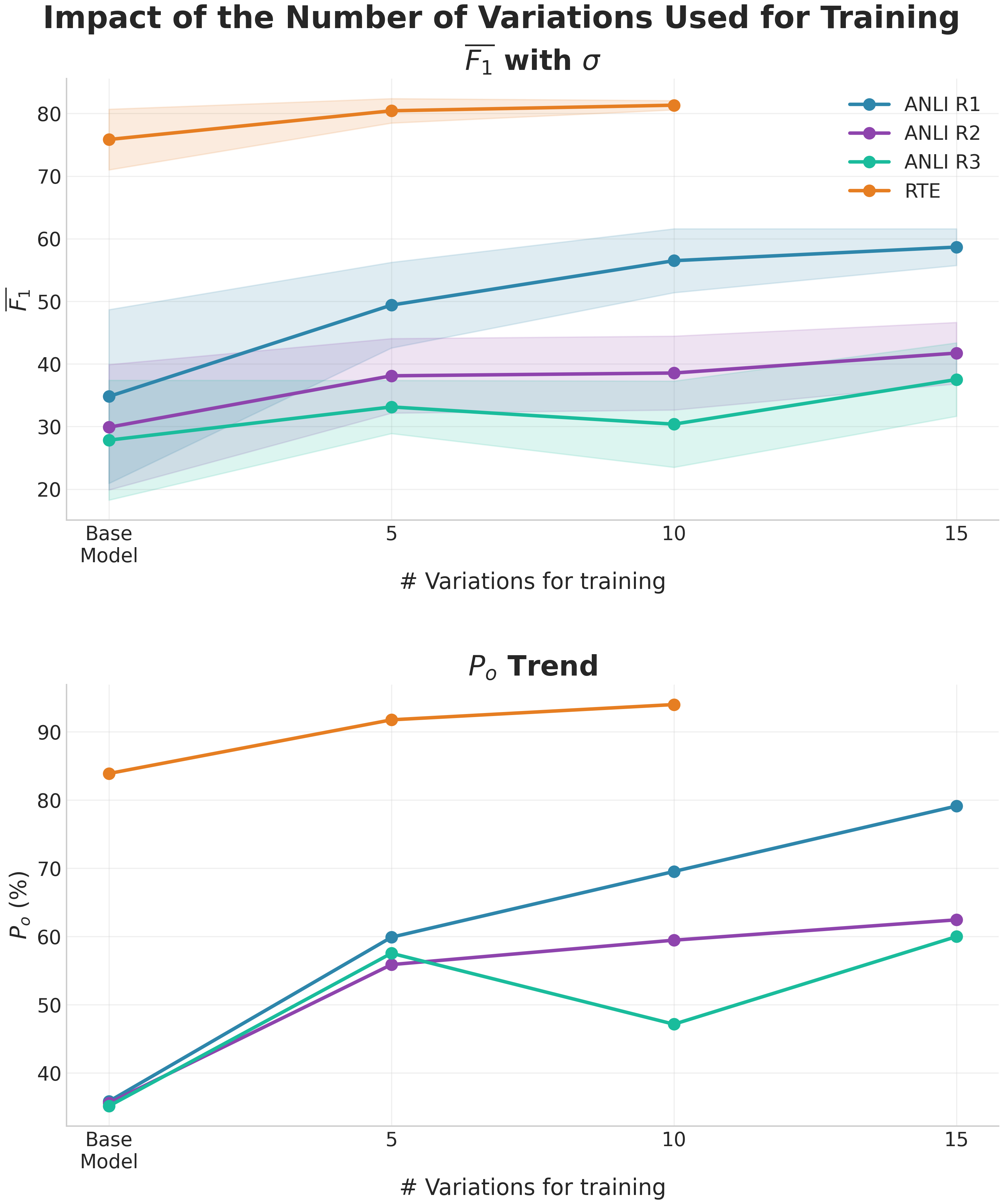}
  \caption{Top: $\overline{F_1}$ with shaded $\sigma_{F_1}$ on the \emph{held-out} prompt formats.
  Bottom: observed agreement $P_o$ on the same held-out sets.
  The x-axis is the number of training formats ($K$); ``Base Model'' is the untrained baseline.
  }
  \label{fig:study3_combined}
\end{figure}

\section{Conclusion}
LLMs often change their predictions when semantically equivalent prompts are phrased differently, undermining consistency and reliability. To address the problem of semantic consistency, we introduced F$^2$C, an unsupervised training method that uses majority voting across prompt variations to form hard pseudo-labels, then selectively aligns distributions toward confident majority voters while encouraging agreement among them. Our method relies on signals the model already treats as reliable, reinforcing them to ensure consistency while minimizing the influence of noisy or uncertain variations.

To validate our method, we conducted comprehensive experiments across multiple datasets and generalization scenarios. Across 11 datasets, F$^2$C consistently improves agreement, increases mean $F_1$, and reduces variance across prompt formats, outperforming both swarm distillation and the CCE-only variant. These gains persist in two generalization settings: cross-dataset transfer and generalization to unseen prompt formats, in which training on only a subset of formats still improves performance and agreement on held-out ones.

Our results suggest that much of the inconsistency from prompt phrasing can be mitigated by leveraging the model’s own internal consensus, without gold labels. Future work includes extending F$^2$C to open-ended generation, exploring adaptive selection of high-confidence variations beyond top-$K$, and combining our approach with lightweight supervision when labels are available.

\section*{Limitations}
While our results indicate consistent gains, several limitations should be acknowledged. First, in all our experiments, we fine-tuned a 3B instruction-tuned model (i.e., \texttt{Qwen2.5-3B-Instruct} with LoRA). Consequently, the scalability of F$^2$C to larger or smaller models, different pretraining corpora, or non-instruction-tuned bases is not established. Second, our evaluation is specifically focused on classification tasks with discrete labels. We do not study open-ended generation (e.g., long-form QA or chain-of-thought), for which our proposed method and evaluation metrics may require adaptation. Third, the perturbations we consider are non-adversarial template variants drawn from PromptSource. We do not test robustness to stronger or adversarial edits (character-, word-, or sentence-level changes), jailbreak-style attacks, multilingual rewrites, or heavy formatting noise. Beyond coverage, F$^2$C assumes multiple semantically equivalent templates per instance and uses them during training. In settings with scarce or low-quality templates, effectiveness and efficiency may degrade. Methodologically, we rely on the majority pseudo-labels and skip instances without a majority. On small or class-imbalanced datasets, the majority may be wrong, and the skip rule can bias learning toward ``easier'' examples, despite our confidence-aware variation selection for CC set. For model selection, we use validation $F_1$ even though our objectives also target agreement ($P_o$) and dispersion ($\sigma_{F_1}$). Alternative criteria (e.g., multi-objective or worst-case) could yield different trade-offs, and we do not evaluate calibration or abstention. In terms of generalization, our target datasets are across related English NLP classification datasets, not across modalities, code, tool-use tasks, or languages. Finally, F$^2$C introduces several hyperparameters (e.g., CC set size cap, confidence thresholds, temperature) that we do not exhaustively tune.


\section*{Ethical Considerations}
We use only publicly available datasets and open-weight models, with no new human data collection. All datasets used in this work are established benchmarks obtained via the HuggingFace \texttt{datasets} library~\citep{lhoest2021datasetscommunitylibrarynatural} or their official repositories, each under its original license. These corpora are designed for evaluating language understanding and reasoning and contain de-identified, non-sensitive text drawn from newswire, Wikipedia, instructional materials, or crowdsourced fictional narratives. While some datasets may include named entities (e.g., public figures in news excerpts), to the best of our knowledge none contain contact details or other sensitive personal identifiers.


\appendix

\section{Appendix}
\label{sec:appendix}
\subsection{Why swarm distillation moves all students towards their average?}
\begin{theorem}[Mixture-teacher decomposition]\label{thm:mixkl}
Let $Y$ be finite. For a fixed input with $K$ semantically equivalent formats, let
\(q_1,\dots,q_K \in \Delta^{|Y|}\) be teacher distributions (treated as constants)
and let \(p_j(\theta)\in \Delta^{|Y|}\) be the student for format \(j\).
For weights \(w_i^{(j)}\ge 0\) with \(\sum_{i=1}^K w_i^{(j)}=1\), define
\[
\bar q^{(j)} \coloneqq \sum_{i=1}^K w_i^{(j)}\, q_i .
\]
Then, for each \(j\),
\begin{equation}
\label{eq:mixkl-decomp}
\begin{aligned}
\sum_{i=1}^K\! w_i^{(j)}\, \KL\!\big(q_i \,\|\, p_j(\theta)\big)
&= \KL\!\big(\bar q^{(j)} \,\|\, p_j(\theta)\big) \\
&+ \sum_{i=1}^K\! w_i^{(j)}\, \KL\!\big(q_i \,\|\, \bar q^{(j)}\big).
\end{aligned}
\end{equation}

and the last sum is constant in \(\theta\). Hence
\(\nabla_\theta\) of the left side equals
\(\nabla_\theta \KL(\bar q^{(j)}\|p_j(\theta))\).
In the uniform case \(w_i^{(j)}=\tfrac{1}{K}\), every \(p_j\) is pulled toward
the same average \(\bar q=\tfrac{1}{K}\sum_i q_i\).
\end{theorem}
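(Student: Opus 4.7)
I would prove the identity by splitting each $\KL$ into a cross-entropy and an entropy term, then exploiting the linearity of cross-entropy in its first argument together with the defining relation $\bar q^{(j)} = \sum_i w_i^{(j)} q_i$. This is a standard ``compensation'' identity that underlies, for instance, the Jensen--Shannon decomposition, so no new machinery is needed beyond bookkeeping on the finite simplex $\Delta^{|Y|}$.

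\textbf{Steps.} First, I would expand $\KL(q_i\|p_j(\theta)) = \sum_y q_i(y)\log q_i(y) - \sum_y q_i(y)\log p_j(\theta)(y)$ and take the convex combination with weights $w_i^{(j)}$. By linearity in the first argument, the $\log p_j$ part collapses to $\sum_y \bar q^{(j)}(y)\log p_j(y)$, which is exactly the cross-entropy appearing in $\KL(\bar q^{(j)}\|p_j)$. Second, I would subtract $\KL(\bar q^{(j)}\|p_j) = \sum_y \bar q^{(j)}\log\bar q^{(j)} - \sum_y \bar q^{(j)}\log p_j$ from both sides; the cross-entropy terms cancel and the remainder to verify is $\sum_i w_i^{(j)}\sum_y q_i\log q_i - \sum_y \bar q^{(j)}\log \bar q^{(j)}$. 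Third, I would rewrite $\sum_y \bar q^{(j)}\log \bar q^{(j)}$ as $\sum_i w_i^{(j)}\sum_y q_i(y)\log \bar q^{(j)}(y)$ (valid by applying the same linearity to the integrand $\log \bar q^{(j)}$), and regroup. The resulting expression is precisely $\sum_i w_i^{(j)}\bigl[\sum_y q_i\log q_i - \sum_y q_i \log\bar q^{(j)}\bigr] = \sum_i w_i^{(j)}\KL(q_i\|\bar q^{(j)})$, which closes the identity.

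\textbf{Gradient consequence.} Since the $q_i$ and weights $w_i^{(j)}$ are treated as constants (stop-gradient teachers), the residual $\sum_i w_i^{(j)}\KL(q_i\|\bar q^{(j)})$ is $\theta$-free, so $\nabla_\theta$ of the left-hand side equals $\nabla_\theta\KL(\bar q^{(j)}\|p_j(\theta))$. With uniform weights $w_i^{(j)}=1/K$, the target $\bar q^{(j)} = \tfrac{1}{K}\sum_i q_i$ is the same for every $j$, so every student is pulled toward the common mean, which is the ``swarm averages toward the mixture'' behavior invoked in the main text to motivate F$^2$C.

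\textbf{Main obstacle.} There is no substantive obstacle; the proof is purely algebraic. The only places to be careful are applying linearity twice with two different log-integrands ($\log p_j$ and $\log \bar q^{(j)}$) without double-counting the $q_i\log q_i$ entropy term, and using the standard conventions $0\log 0 = 0$ together with $\KL(q\|p) = \infty$ whenever $\mathrm{supp}(q)\not\subseteq\mathrm{supp}(p)$, which ensures both sides of \eqref{eq:mixkl-decomp} agree (possibly as $+\infty$) in degenerate cases.
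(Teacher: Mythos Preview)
Your proposal is correct and follows essentially the same approach as the paper: the paper compresses your cross-entropy/entropy split into the one-line identity $\log\tfrac{q_i}{p_j}=\log\tfrac{q_i}{\bar q^{(j)}}+\log\tfrac{\bar q^{(j)}}{p_j}$, then weights by $w_i^{(j)}$ and uses $\sum_i w_i^{(j)} q_i=\bar q^{(j)}$, which is exactly your linearity-in-the-first-argument step. The gradient consequence and uniform-weight specialization are identical.
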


\begin{proof}
Use \(\log\frac{q_i}{p_j} = \log\frac{q_i}{\bar q^{(j)}} + \log\frac{\bar q^{(j)}}{p_j}\),
sum over \(i\) with weights \(w_i^{(j)}\), and note that
\(\sum_i w_i^{(j)} q_i = \bar q^{(j)}\).
The term \(\sum_i w_i^{(j)} \KL(q_i\|\bar q^{(j)})\) contains no \(\theta\).
\end{proof}

\subsection{Implementation Details}
\label{sec:appendix-implementation-details}

\begin{figure}[!ht]
  \centering
  \includegraphics[width=\columnwidth]{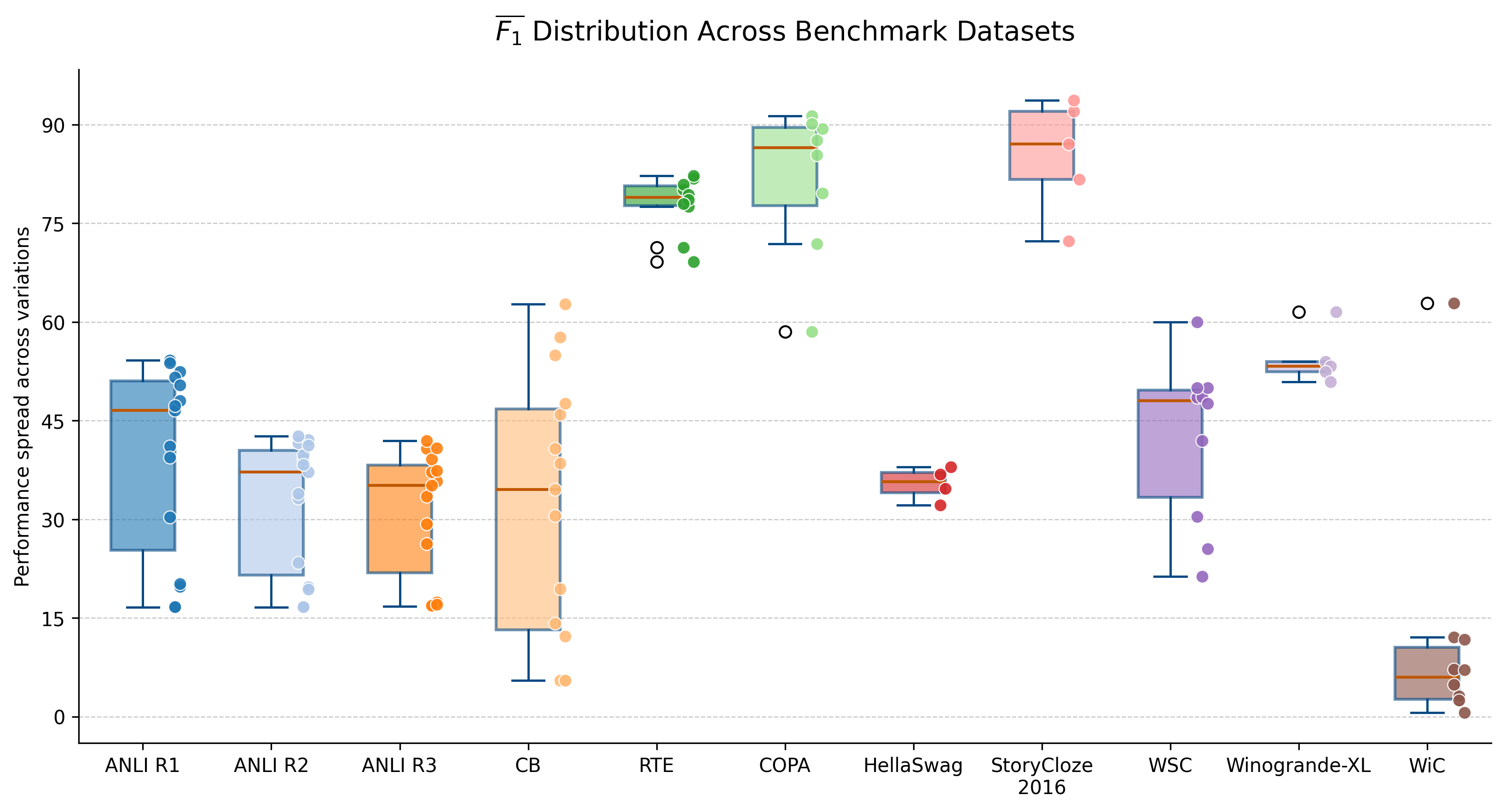}
  \caption{Per-dataset distribution of $F_1$ across prompt variations for the \texttt{Qwen2.5-3B-Instruct}.
  }
  \label{fig:app-f1-spread}
\end{figure}

\paragraph{Per-variation scoring.}
Let $y_c^{(v)}$ tokenize into $T_{i,v,c}$ answer tokens. We define the average token log-probability of choosing label $\ell_c$ under template $v$:
\begin{equation}
\mathrm{LL}_i[v,c] \;=\; \frac{1}{T_{i,v,c}}\sum_{t=1}^{T_{i,v,c}} \log p_\theta\!\bigl(y_{t}\mid x_i^{(v)}, y_{<t}\bigr),
\label{eq:avg-logprob-simple}
\end{equation}
where $y_t$ is the $t$-th answer token of $y_c^{(v)}$. These scores induce a per-variation distribution over labels:
\begin{equation}
\pi_{i,v,c} \;=\; \frac{\exp(\mathrm{LL}_i[v,c])}{\sum_{c'=1}^{L}\exp(\mathrm{LL}_i[v,c'])}.
\label{eq:per-var-softmax}
\end{equation}
The predicted label for variation $v$ is
\begin{equation}
\hat{y}_{i,v} \;=\; \arg\max_{c\in\{1,\dots,L\}} \pi_{i,v,c}.
\label{eq:app-per-var-pred}
\end{equation}

\paragraph{LoRA Configuration.}
\label{paragraph:appendix-lora-config}
We use LoRA \citep{hu2022lora} to reduce the number of trainable parameters and the compute required per step. We apply LoRA adapters in every transformer block to the attention and MLP projections, keeping backbone weights frozen. In our setup, we use rank $r{=}16$, scaling $\alpha{=}32$, and dropout $0.05$.

\paragraph{Model Selection.}
\label{paragraph:appendix-model-selection}
We select checkpoints by the highest $\overline{F_1}$ on the validation set. This criterion decreases under overfitting to particular prompts and when the majority label diverges from the gold label, providing a robust target while $P_o$ and $\sigma_{F_1}$ quantify consistency.

\paragraph{Compute, Infrastructure, and Packages.}
\label{paragraph:appendix-compute}
We fine-tune the open-weight \texttt{Qwen2.5-3B-Instruct}~\citep{qwen2025qwen25technicalreport}, an instruction-tuned 3B-parameter model released by Alibaba Cloud under the Qwen Research License (non-commercial). Fine-tuning is performed using the HuggingFace \texttt{transformers}~\citep{wolf2020huggingfacestransformersstateoftheartnatural} and \texttt{peft}~\citep{peft} libraries with LoRA adapters (see~\autoref{paragraph:appendix-lora-config}).

Training was conducted on a GPU cluster running Ubuntu~22.04.5~LTS with NVIDIA Container Toolkit. Smaller datasets with fewer prompt variations were trained on a pool of seven NVIDIA RTX~A6000 GPUs (48~GB VRAM each). Larger datasets were trained on a single NVIDIA~A100~GPU (80~GB). Depending on dataset size and number of prompt formats, total training time ranged from approximately 3 to 23~hours per dataset. 

To maintain reproducibility and efficiency, we use mixed-precision (\texttt{bf16}) training, gradient accumulation, and uniform random seeds across runs. Experiments are orchestrated via custom shell scripts and Weights~\&~Biases \citep{wandb} logging for monitoring. We did not use model parallelism or distributed fine-tuning beyond single-node multi-GPU setups.

We use Python~3.9 with PyTorch~2.8.0~\citep{Ansel_PyTorch_2_Faster_2024}, \texttt{transformers}~4.56.1~\citep{wolf2020huggingfacestransformersstateoftheartnatural}, \texttt{peft}~0.17.1~\citep{peft}, \texttt{accelerate}~1.10.1~\citep{accelerate}, \texttt{datasets}~4.1.0~\citep{lhoest2021datasetscommunitylibrarynatural}, and \texttt{tokenizers}~0.22.0/\texttt{sentencepiece}~0.2.1~\citep{kudo-richardson-2018-sentencepiece, Moi_HuggingFace_s_Tokenizers_2023}.

\subsection{Datasets}
\label{sec:dataset}
\begin{table}[t]
\centering
  \resizebox{\linewidth}{!}{%
\begin{tabular}{lrrrrr}
\toprule
Dataset & Train & Val & Test & \#Formats & \#Labels \\
\midrule
RTE             & 1{,}490  & 1{,}000 &   277 & 10 & 2 \\
CB              &   193    &    57   &    57 & 15 & 3 \\
WSC             &   450    &   104   &   104 & 10 & 2 \\
COPA            &   300    &   100   &   100 &  8 & 2 \\
WiC             & 4{,}428  & 1{,}000 &   637 & 10 & 2 \\
Winogrande      & 10{,}000 & 1{,}000 & 1{,}267 & 5 & 2 \\
HellaSwag       & 10{,}000 &   600   &   600 & 4 & 4 \\
StoryCloze      &   871    & 1{,}000 & 1{,}871$^{\dagger}$ & 5 & 2 \\
ANLI R1         & 10{,}000 & 1{,}000 & 1{,}000 & 15 & 3 \\
ANLI R2         & 10{,}000 & 1{,}000 & 1{,}000 & 15 & 3 \\
ANLI R3         & 10{,}000 & 1{,}000 & 1{,}200 & 15 & 3 \\
\bottomrule
\end{tabular}}
\caption{``Test'' denotes the official \emph{validation} set used for evaluation because most tasks do not release test labels. \#Formats is the number of PromptSource \citep{bach2022promptsourceintegrateddevelopmentenvironment} templates used to construct each dataset's prompt variations. $^{\dagger}$StoryCloze has no public train split; we use the 2016 validation file for train/val and the 2016 test file for evaluation.}
\label{tab:dataset-stats}
\end{table}

\begin{figure}[t]
  \centering
  \includegraphics[width=\columnwidth]{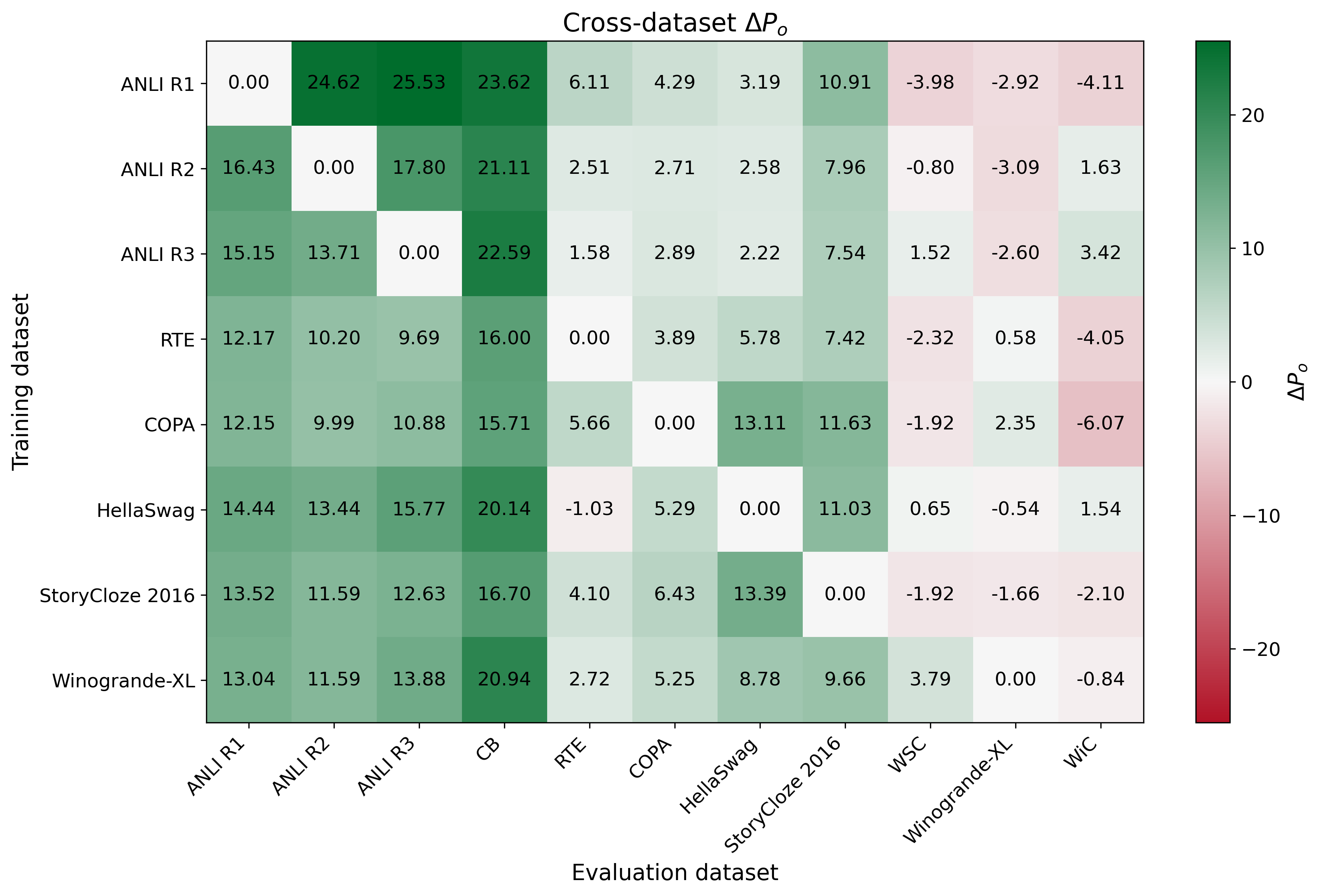}
  \caption{Cross-dataset transfer for observed agreement $P_o$. Each cell shows $\Delta P_o$ relative to the base model (train on rows, evaluate on columns). Green indicates improvement; red indicates degradation.}
  \label{fig:hm:raw_agreement}
\end{figure}

\begin{figure}[t]
\centering
    \includegraphics[width=\linewidth]{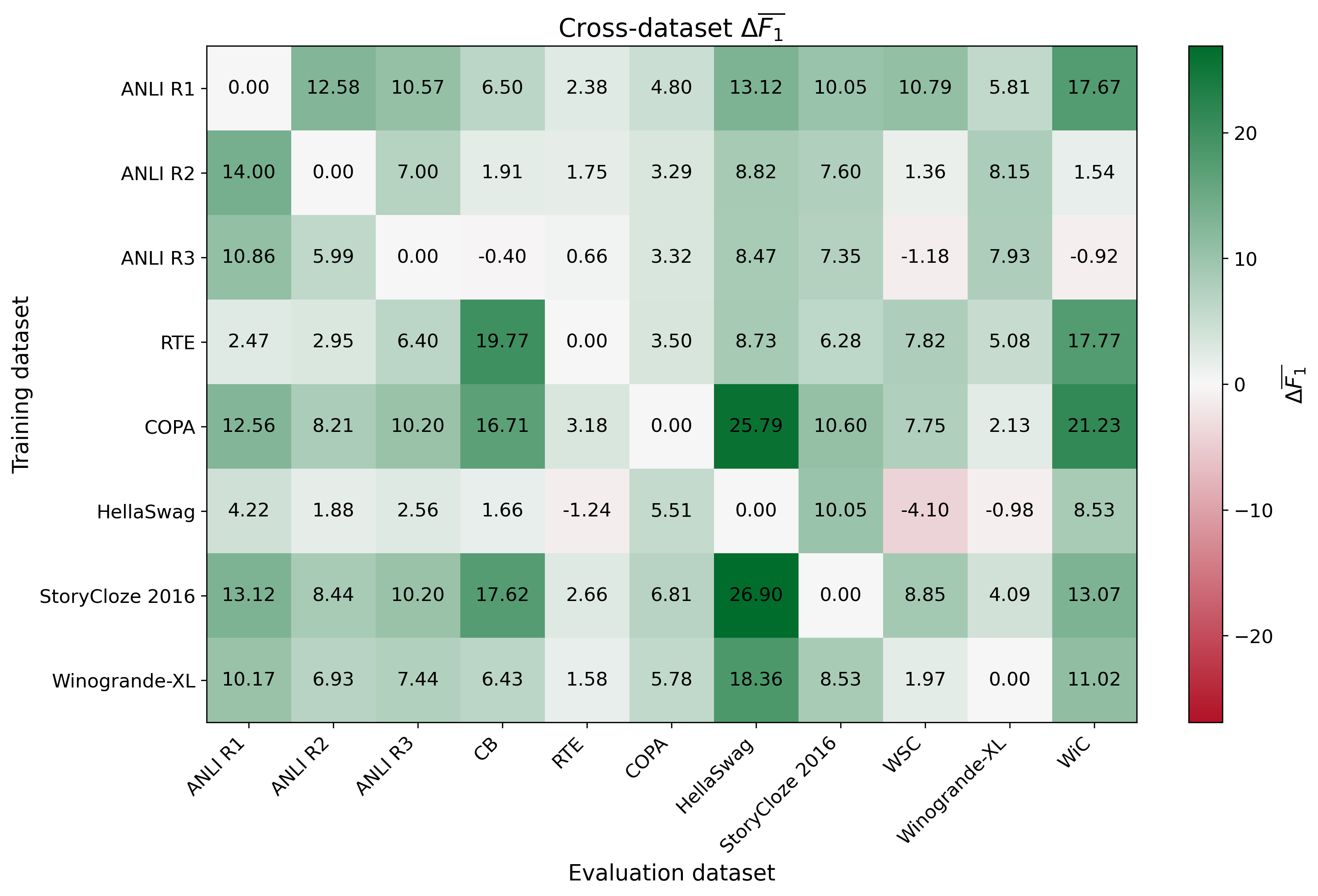}
    \caption{$\Delta\overline{F_1}$: Cross-dataset transfer under F$^2$C. Each cell shows the change relative to the base model when training on the row dataset and evaluating on the column dataset. Green indicates improvement; red indicates degradation.}
    \label{fig:hm:meanf1}
\end{figure}

\begin{figure}
    \centering
    \begin{subfigure}[t]{0.49\textwidth}
    \includegraphics[width=\linewidth]{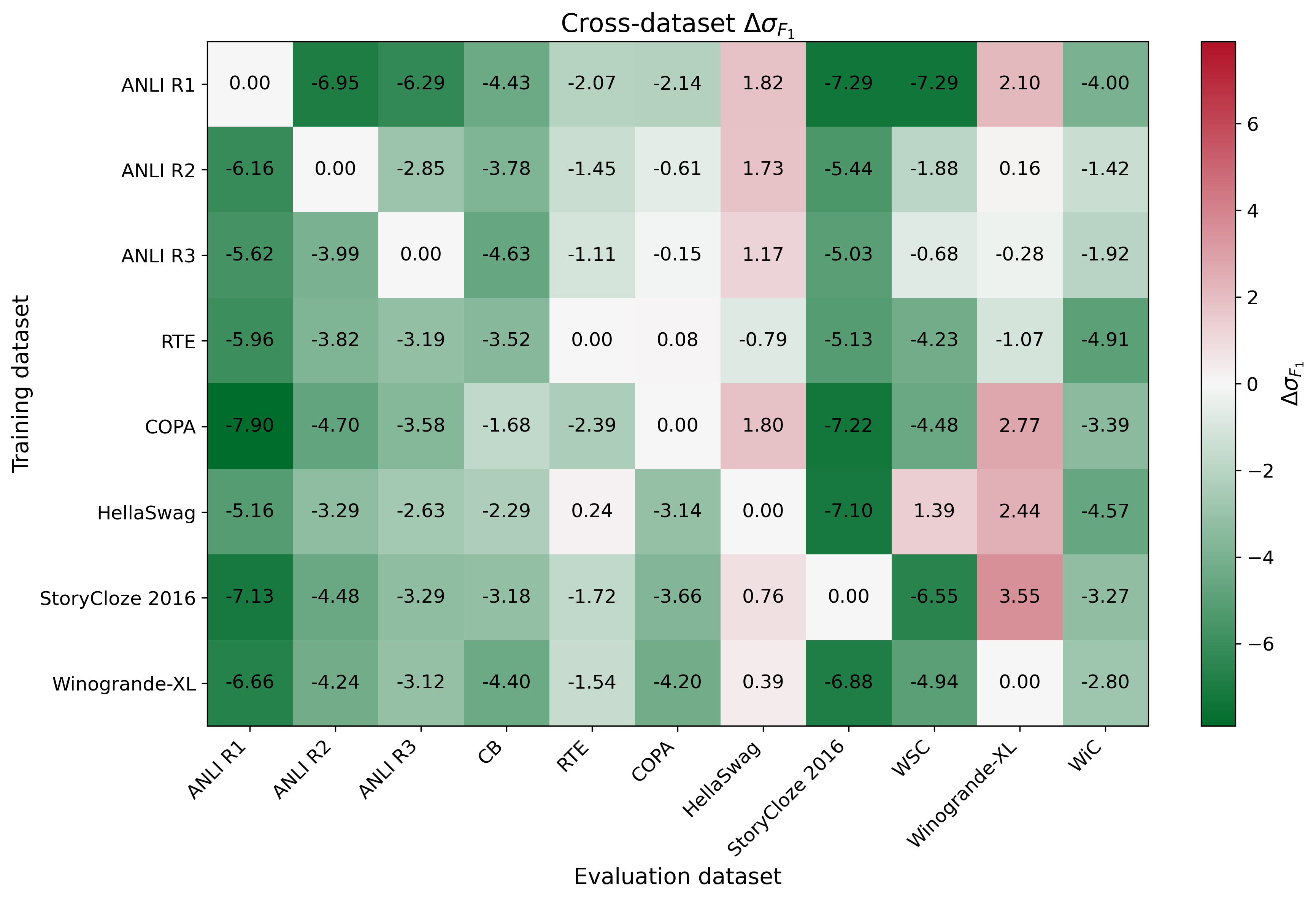}
  \end{subfigure}
  \caption{$\Delta\sigma_{F_1}$ (lower is better): Cross-dataset transfer under F$^2$C. Each cell shows the change relative to the base model when training on the row dataset and evaluating on the column dataset. Green indicates improvement; red indicates degradation.}
    \label{fig:hm:stdf1}
\end{figure}

We derive our validation split from the original training data (the original train is partitioned into our train and validation). Unless the original split is smaller, we hold out up to 1{,}000 examples for validation (600 for HellaSwag). For low-resource datasets, we set the derived validation size to match the size of the official validation split. When the remaining training pool is large, we cap it at 10{,}000 examples via uniform random sampling with a fixed seed (see \autoref{tab:dataset-stats}). Before stratification, we deduplicate base examples whose \emph{rendered} prompt text would otherwise repeat across splits.

All datasets used in this work are publicly available under research-oriented licenses. Specifically, RTE, CB, WSC, COPA, WiC, WinoGrande, HellaSwag, and ANLI are accessible via the HuggingFace \texttt{datasets} library~\citep{lhoest2021datasetscommunitylibrarynatural} and retain their original license terms (some permit commercial use, while others restrict use to non-commercial research). StoryCloze 2016 is available from the official ROCStories website for research use only. We do not redistribute any dataset; instead, users may obtain them directly from their original sources or through HuggingFace. Each dataset preserves its original licensing and citation requirements, and all usage in this work complies with those terms.

\end{document}